\algrenewcommand\algorithmicdo{}
\algrenewcommand\algorithmicthen{}
\algrenewcommand\algorithmicprocedure{}
\newcommand{\algindent}[1][1]{%
	\setlength\@tempdima{\algorithmicindent}%
	\hskip\dimexpr#1\@tempdima\relax%
}
\title{An Asymptotically-Optimal Sampling-Based Algorithm\\ for \Bidirectional Motion Planning}
\author{Joseph A.\ Starek$^\ast$, Javier V.\ Gomez$^\dagger$, Edward Schmerling$^\ddagger$, Lucas Janson$^\mathsection$, Luis Moreno$^\dagger$, Marco Pavone$^\ast$%
	\thanks{$^\ast$Dept.\ of Aeronautics and Astronautics, Stanford University, Stanford, CA 94305, \texttt{\{jstarek, pavone\}@stanford.edu}.}
	\thanks{$^\dagger$Dept.\ of Systems Engineering \& Automation, Carlos III University of Madrid, Madrid, Spain, 28911, \texttt{\{jvgomez, moreno\}@ing.uc3m.es}.}
	\thanks{$^\ddagger$Inst.\ for Computational \& Mathematical Engineering, Stanford University, Stanford, CA 94305, \texttt{schmrlng@stanford.edu}.}
	\thanks{$^\mathsection$Dept.\ of Statistics, Stanford University, Stanford, CA 94305, \texttt{ljanson@stanford.edu}.}
	\thanks{This work was supported by NASA under the Space Technology Research Grants Program, Grant NNX12AQ43G.}
}
\renewcommand{\bibfiles}{./bib/alias,./bib/main}
\newcommand{\figwidth}{0.48\linewidth}
\newcounter{terminationline}
\newcounter{swaptreesline}
\newcommand{\OptimalityCriterion}{Best Path\xspace}		
\newcommand{\ZminCriterion}{Balanced Trees\xspace}		
\newcommand{\ExpandTreeFromNode}{Expand\xspace}			
\newcommand{\InitializeTree}{Initialize\xspace}			
\newcommand{\InsertNewSample}{Insert\xspace}			
\newcommand{\CostVsTime}{CvT}							
\newcommand{\SuccessVsTime}{SRvT}						
\newcommand{\AlgorithmVariants}{SwapFeas, SwapOpt, ZminFeas, ZminOpt}	
\newcommand{\Dimensions}{5, 10}											
\newcommand{\ObstacleCoverages}{0, 25, 50}								
\newcommand{\submeet}{_{\mathrm{meet}}}					
\newcommand{\bidirectional}{bi-directional\xspace}
\newcommand{\Bidirectional}{Bi-directional\xspace}
\newcommand{\bidirectionality}{bi-directionality\xspace}
\newcommand{\state}{configuration\xspace}
\newcommand{\states}{configurations\xspace}
\renewcommand{\nonnegativereals}{\reals_{\geq 0}}
\newcommand{\FrontierSet}{\SetV_{\mathrm{open}}}		
\newcommand{\UnexploredSet}{\SetV_{\mathrm{unvisited}}}	
\begin{document}

\maketitle

\begin{abstract}
	\Bidirectional search is a widely used strategy to increase the success and convergence rates of sampling-based motion planning algorithms. Yet, few results are available that merge both \bidirectional search and asymptotic optimality into existing optimal planners, such as \PRMstar, \RRTstar, and \FMT. The objective of this paper is to fill this gap. Specifically, this paper presents a \bidirectional, sampling-based, asymptotically-optimal algorithm named \Bidirectional \FMT (\BFMT) that extends the Fast Marching Tree (\FMT) algorithm to \bidirectional search while preserving its key properties, chiefly lazy search and asymptotic optimality through convergence in probability. \BFMT performs a two-source, \emph{lazy} dynamic programming recursion over a set of randomly-drawn samples, correspondingly generating two search trees: one in cost-to-come space from the initial \state and another in cost-to-go space from the goal \state. Numerical experiments illustrate the advantages of \BFMT over its unidirectional counterpart, as well as a number of other state-of-the-art planners.
\end{abstract}

\section{Introduction}

Motion planning is the computation of paths that guide systems from an initial \state to a set of goal \state(s) around nearby obstacles, while possibly optimizing an objective function.  The problem has a long and rich history in the field of robotics, and many algorithmic tools have been developed; we refer the interested reader to \cite{SML:06} and references therein.
Arguably, \emph{sampling-based algorithms} are among the most pervasive, widespread planners available in robotics, including the Probabilistic Roadmap algorithm (\PRM)\cite{LEK-PS-JCL-MHO:96}, the Expansive Space Trees algorithm (EST) \cite{DH-JCL-RM:99a, JMP-NB-LEK:04}, and the Rapidly-Exploring Random Tree algorithm (\RRT) \cite{SML-JJK:01}.
Since their development, efforts to improve the ``quality'' of paths led to asymptotically-optimal (AO) variants of \RRT and \PRM, named \RRTstar and \PRMstar, respectively, whereby the cost of the returned solution converges almost surely to the optimum as the number of samples approaches infinity \cite{SK-EF:11, JL-KH:14}.  Many other planners followed, including BIT$^\ast$ \cite{JDG-SSS-TDB:14} and RRT$^{\#}$ \cite{OA-PT:13} to name a few.  Recently, a conceptually different asymptotically-optimal, sampling-based motion planning algorithm, called the Fast Marching Tree (\FMT) algorithm, has been presented in \cite{LJ-MP:13,LJ-ES-AC-ea:15}.  Numerical experiments suggested that \FMT converges to an optimal solution faster than \PRMstar or \RRTstar, especially in high-dimensional \state spaces and in scenarios where collision-checking is expensive.

It is a well-known fact that \emph{\bidirectional} search can dramatically increase the convergence rate of planning algorithms, prompting some authors \cite{YKH-NA:92} to advocate its use for accelerating essentially any motion planning query.  This was first rigorously studied in \cite{IP:69} and later investigated, for example, in \cite{ML-PR:89, AG-HK-RW:06}.  Collectively, the algorithms presented in \cite{IP:69, ML-PR:89, AG-HK-RW:06, YKH-NA:92} belong to the family of \emph{non}-sampling-based approaches and are more or less closely related to a \bidirectional implementation of the Dijkstra Method.  More recently, and not surprisingly in light of these performance gains, \bidirectional search has been merged with the sampling-based approach, with \RRTConnect and SBL representing the most notable examples \cite{JJK-SML:00, GS-JCL:03}.

Though such \bidirectional versions of \RRT and \PRM are probabilistically complete, they do not enjoy optimality guarantees.  The next logical step in the quest for fast planning algorithms is the design of \emph{\bidirectional}, sampling-based, asymptotically-optimal algorithms. To the best of our knowledge, the only available results in this context are \cite{BA-MS:11} and the unpublished work \cite{MJ-AP:13}, both of which discuss \bidirectional implementations of \RRTstar.  Neither work, however, provides a mathematically-rigorous proof of asymptotic optimality starting from first principles.  Accordingly, the objective of this paper is to propose and \emph{rigorously} analyze such an algorithm.

{\em Statement of Contributions}: This paper introduces the \Bidirectional Fast Marching Tree (\BFMT) algorithm.\footnote{The asterisk $^\ast$, pronounced ``star'', is intended to represent asymptotic optimality much like for the \RRTstar and \PRMstar algorithms.}  To the best of the authors' knowledge, this is the first tree-based, asymptotically-optimal \bidirectional sampling-based planner.  \BFMT extends \FMT to \bidirectional search and essentially performs a ``lazy,'' \bidirectional dynamic programming recursion over a set of probabilistically-drawn samples in the free \state space.  The contribution of this paper is threefold.  First, we present the \BFMT algorithm in \cref{sec: Algorithm Description}.  Second, we rigorously prove the asymptotic optimality of \BFMT (under the notion of convergence in probability) and characterize its convergence rate in \cref{sec: BFMT* Proofs}. We note that the convergence rate of \FMT in \cite{LJ-ES-AC-ea:15} is proved only for obstacle-free \state spaces, while we generalize that result to allow for the presence of obstacles. Finally, we perform numerical experiments in \cref{sec:sim} across a number of planning spaces that suggest \BFMT converges to an \emph{optimal} solution at least as fast as \FMT, \PRMstar, and \RRTstar, and sometimes significantly faster.

\section{Problem Definition}
\label{sec: Problem}

Let $\Xspace$ be a $d$-dimensional \state space, and let $\Xobs$ be the obstacle region, such that $\Xspace \setminus \Xobs$ is an open set (we consider $\boundary\Xspace \subset \Xobs$).  Denote the obstacle-free space as $\Xfree = \closure{\Xspace \setminus \Xobs}$, where $\closure{\cdot}$ denotes the closure of a set.  A path planning problem, denoted by a triplet $(\Xfree, \Vecx\subinit, \Vecx\subgoal)$, seeks to maneuver from an initial \state $\Vecx\subinit$ to a goal \state $\Vecx\subgoal$ through $\Xfree$.  Let a continuous function of \emph{bounded variation} $\map{\sigma}{\unitinterval}{\Xspace}$, called a \emph{path}, be \emph{collision-free} if $\sigma(\tau) \in \Xfree$ for all $\tau \in \unitinterval$.  A path is called a \emph{feasible} solution to the planning problem $(\Xfree, \Vecx\subinit, \Vecx\subgoal)$ if it is collision-free, $\sigma(0) = \Vecx\subinit$, and $\sigma(1) = \Vecx\subgoal$.

Let $\Sigma$ be the set of all paths. A cost function for the planning problem $(\Xfree, \Vecx\subinit, \Vecx\subgoal)$ is a function $\map{\CostFcn}{\Sigma}{\nonnegativereals}$ from $\Sigma$ to the nonnegative real numbers; in this paper, we consider as 
$\CostFcn(\sigma)$ the \emph{arc length} of $\sigma$ with respect to the Euclidean metric in $\Xspace$ (the extension to general cost functions will be briefly discussed in \cref{subset:bfmt_disc}).
\begin{quote}{\bf Optimal path planning problem}: 
	Given a path planning problem $(\Xfree, \Vecx\subinit, \Vecx\subgoal)$ and an arc length function $\map{\CostFcn}{\Sigma}{\nonnegativereals}$, find a feasible path $\sigma\supopt$ such that $\CostFcn(\sigma\supopt) = \min\left\{ \CostFcn(\sigma) \suchthat \sigma \text{ is feasible} \right\}$.  If no such path exists, report failure.
\end{quote}

Finally, we introduce some definitions concerning the \emph{clearance} of a path, \textit{i.e.}, its ``distance" from $\Xobs$ \cite{LJ-ES-AC-ea:15}.  
For a given $\delta > 0$, the $\delta$-interior of $\Xfree$ is defined as the set of all points that are at least a distance $\delta$ away from
any point in $\Xobs$. A collision-free path $\sigma$  is said to have strong $\delta$-clearance if it lies entirely inside the $\delta$-interior of $\Xfree$.
A path planning problem with optimal path cost $\CostFcn\supopt$ is called \emph{$\delta$-robustly feasible} if there exists a strictly positive sequence $\delta_n \rightarrow 0$, with $\delta_n \le \delta \;\, \forall n \in \naturals$, and a sequence $\{\sigma_n\}_{n=1}^{\infty}$ of feasible paths such that $\liminfty[n] \CostFcn(\sigma_n) = \CostFcn\supopt$ and for all $n \in \naturals$, $\sigma_n$ has strong $\delta_n$-clearance, $\sigma_n(1) = \Vecx\subgoal$, $\sigma_n(\tau) \neq \Vecx\subgoal$ for all $\tau \in (0,1)$, and $\sigma_n(0) = \Vecx\subinit$.

\section{The \texorpdfstring{\BFMT}{BFMT*} Algorithm}
\label{sec: Algorithm Description}
In this section, we present the Bi-Directional Fast Marching Tree algorithm, \BFMT, represented in pseudocode as \cref{alg: Static BiDirFMT}.  To begin, we provide a high-level description of \FMT in \cref{subsec:fmt}, on which \BFMT is based.  We follow in \cref{subsec:bfmt} with \BFMT's own high-level description, and then provide additional details in \cref{subsec:bfmt_det}. 

\subsection{\texorpdfstring{\FMT}{FMT*} -- High-level description}
\label{subsec:fmt}
The \FMT algorithm, introduced in \cite{LJ-MP:13,LJ-ES-AC-ea:15}, is a unidirectional algorithm that essentially performs a forward dynamic programming recursion over a set of sampled points and correspondingly generates a \emph{tree of paths} that grow steadily outward in cost-to-come space. The recursion performed by \FMT is characterized by three key features: (1) It is tailored to disk-connected graphs, where two samples are considered \emph{neighbors} (hence connectable) if their distance is below a given bound, referred to as the \emph{connection radius}; (2) It performs graph construction and graph search \emph{concurrently}; and (3) For the evaluation of the immediate cost in the dynamic programming recursion, one ``lazily" ignores the presence of obstacles, and whenever a locally-optimal (assuming no obstacles) connection to a new sample intersects an obstacle, that sample is simply skipped and left for later (as opposed to looking for other locally-optimal connections in the neighborhood).

The last feature, which makes the algorithm ``lazy," may cause \emph{suboptimal} connections. A central property of \FMT is that the
cases where a suboptimal connection is made become vanishingly rare as the number of samples goes to infinity, which helps maintain the algorithm's asymptotically optimality.  This manifests itself into a key computational advantage---by restricting collision detection to only locally-optimal connections, \FMT (as opposed to, \textit{e.g.}, \PRMstar \cite{SK-EF:11}) avoids a large number of costly collision-check computations, at the price of a vanishingly small ``degree" of suboptimality.  We refer the reader to \cite{LJ-MP:13,LJ-ES-AC-ea:15} for a detailed description of the algorithm and its advantages.

\subsection{\texorpdfstring{\BFMT}{BFMT*} -- High-level description}
\label{subsec:bfmt}
At its core, \BFMT implements a \emph{\bidirectional} version of the \FMT algorithm by simultaneously propagating two wavefronts (henceforth, the leaves of an expanding tree will be referred to as the wavefront of the tree) through the free \state space.  \BFMT, therefore, performs a \emph{two-source} dynamic programming recursion over a set of sampled points, and correspondingly generates a \emph{pair} of search trees: one in cost-to-come space from the initial \state and another in cost-to-go space from the goal \state (see \cref{fig: BiDirFMT_Paths}).  Throughout the remainder of the paper, we refer to the former as the \emph{forward tree}, and to the latter as the \emph{backward tree}.


\begingroup
\renewcommand{\AlgorithmVariants}{SwapFeas}
\foreach \variant in \AlgorithmVariants {
\begin{figure}[ht!]
	\centering
	\foreach \coverage in \ObstacleCoverages {
		\begin{subfigure}[c]{0.15\textwidth}
			\centering\includegraphics[width=1.0\textwidth]{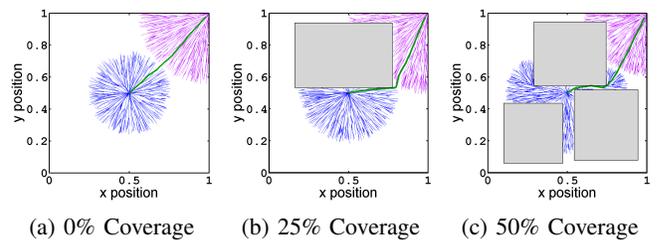}
			\caption{\coverage\% Coverage}
		\end{subfigure}
	}
	\caption{The \BFMT algorithm generates a \emph{pair} of search trees: one in cost-to-come space from the initial \state (blue) and another in cost-to-go space from the goal \state (purple). The path found by the algorithm is in green color.}
	\label{fig: BiDirFMT_Paths}
\end{figure}
}
\endgroup


The dynamic programming recursion performed by \BFMT is characterized by the same lazy feature of \FMT (see \cref{subsec:fmt}).  However, the time it takes to run \BFMT on a given number of samples can be substantially smaller than for \FMT. Indeed, for uncluttered \state spaces, the search trees grow hyperspherically, and hence \BFMT only has to expand about half as far (in both trees) as \FMT in order to return a solution. This is made clear in \cref{fig: BiDirFMT_Paths}(a), in which \FMT would have to expand the forward tree twice as far to find a solution. Since runtime scales approximately with edge number, which scales as the linear distance covered by the tree raised to the dimension of the state space, we may expect in loosely cluttered \state spaces an approximate speed-up of a factor $2^{d-1}$ over \FMT in $d$-dimensional space (the $-1$ in the exponent is because \BFMT has to expand 2 trees, so it loses one factor of 2 advantage).

\subsection{\texorpdfstring{\BFMT}{BFMT*} -- Detailed description}
\label{subsec:bfmt_det}

To understand the \BFMT algorithm, some background notation must first be introduced.  Let $\SampleSet$ be a set of points sampled independently and identically from the uniform distribution on $\Xfree$, to which $\Vecx\subinit$ and $\Vecx\subgoal$ are added. (The extension to non-uniform sampling distributions is addressed in \cref{subset:bfmt_disc}.)  Let tree $\SetT$ be the quadruple $(\SetV, \SetE, \UnexploredSet, \FrontierSet)$, where $\SetV$ is the set of tree nodes, $\SetE$ is the set of tree edges, and $\UnexploredSet$ and $\FrontierSet$ are mutually exclusive sets containing the \emph{unvisited} samples in $\SampleSet$ and the \emph{wavefront} nodes in $\SetV$, correspondingly.  To be precise, the unvisited set $\UnexploredSet$ stores all samples in the sample set $\SampleSet$ that have not yet been considered for addition to the tree of paths.  The wavefront set $\FrontierSet$, on the other hand, tracks in sorted order (by cost from the root) only those nodes which have already been added to the tree that are near enough to tree leaves to actually form better connections.  These sets play the same role as their counterparts in \FMT, see \cite{LJ-MP:13,LJ-ES-AC-ea:15}.  However, in this case \BFMT ``grows'' two such trees, referred to as $\SetT = (\SetV, \SetE,  \UnexploredSet, \FrontierSet)$ and $\SetT^\prime = (\SetV^\prime, \SetE^\prime, \UnexploredSet^\prime, \FrontierSet^\prime)$.  Initially, $\SetT$ is the tree rooted at $\Vecx\subinit$, while $\SetT'$ is the tree rooted at $\Vecx\subgoal$.  Note, however, that the trees are exchanged during the execution of \BFMT, so $\SetT$ in \cref{alg: Static BiDirFMT} is not always the tree that contains $\Vecx\subinit$.

The \BFMT algorithm is represented in \cref{alg: Static BiDirFMT}.  Before describing \BFMT in detail, we list briefly the basic planning functions employed by the algorithm.  Let \Call{SampleFree}{$n$} be a function that returns a set of $n \in \naturals$ points sampled independently and identically from the uniform distribution on $\Xfree$.  Let $\Call{Cost}{\overline{\tilde{\Vecx} \Vecx}}$ be the cost of the straight-line path between \states $\tilde{\Vecx}$ and $\Vecx$. Let \Call{Path}{$\Vecz, \SetT$} return the unique path in tree $\SetT$ from its root to node $\Vecz$.  Also, with a slight abuse of notation, let \Call{Cost}{$\Vecx$, $\SetT$} return the cost of the unique path in tree $\SetT$ from its root to node $\Vecx$, and let \Call{CollisionFree}{$\Vecx$, $\Vecy$} be a boolean function returning true if the straight-line path between \states $\Vecx$ and $\Vecy$ is collision free.  Given a set of samples $\SetA$, let \Call{Near}{$\SetA$, $\Vecz$, $r$} return the subset of $\SetA$ within a ball of radius $r$ centered at sample $\Vecz$ (\textit{i.e.}, the set $\left\{\Vecx \in \SetA \suchthat \norm{\Vecx - \Vecz} < r \right\}$).  Let the \Call{Terminate}{} function represent an external termination criterion (\textit{i.e.}, timeout, maximum number of samples, etc.) which can be used to force early termination (or prevent infinite runtime for infeasible problems).  Finally, regarding tree expansion, let \Call{Swap}{$\SetT$, $\SetT'$} be a function that swaps the two trees $\SetT$ and $\SetT^\prime$. 
and let \Call{Companion}{$\SetT$} return the companion tree $\SetT'$ to $\SetT$ (or vice versa).

We are now in position to describe the \BFMT algorithm.  First, a set of $n$ \states in $\Xfree$ is determined by drawing samples uniformly.  Two trees are then initialized using the \Call{\InitializeTree}{} subfunction at the bottom of \cref{alg: Static BiDirFMT}, with a forward tree rooted at $\Vecx\subinit$ and a reverse tree rooted at $\Vecx\subgoal$.  Once complete, tree expansion begins starting with tree $\SetT$ rooted at $\Vecx\subinit$ using the \Call{\ExpandTreeFromNode}{} procedure in \cref{alg: ExpandTreeFromNode}.  In the following, the node selected for expansion will be consistently denoted by $\Vecz$, while $\Vecx\submeet$ will denote the lowest-cost candidate node for tree connection (\textit{i.e.}, for joining the two trees). The \Call{\ExpandTreeFromNode}{} procedure requires the specification of a  \emph{connection radius} parameter, $r_n$, whose selection will be discussed in \cref{sec: BFMT* Proofs}. \Call{\ExpandTreeFromNode}{} implements the ``lazy'' dynamic programming recursion described (at a high level) in \cref{subsec:bfmt}, making locally-optimal collision-free connections \emph{from} nodes $\Vecx$ near $\Vecz$ unvisited by tree $\SetT$ (those in set $\UnexploredSet$ within search radius $r_n$ of $\Vecz$) \emph{to} wavefront nodes $\Vecx^\prime$ near each $\Vecx$ (those in set $\FrontierSet$ within search radius $r_n$ of $\Vecx$).  Any collision-free edges and newly-connected nodes found are then added to $\SetT$, the connection candidate node  $\Vecx\submeet$ is updated, and $\Vecz$ is dropped from the list  of wavefront nodes.  The key feature of the \Call{\ExpandTreeFromNode}{} function is that in the execution of the dynamic programming recursion it ``lazily'' ignores the presence of obstacles (see \cref{line:lazy}) -- as discussed in \cref{sec: BFMT* Proofs} this comes at no loss of (asymptotic) optimality (see also \cite{LJ-MP:13,LJ-ES-AC-ea:15}). Note the \Call{\ExpandTreeFromNode}{} function is identical to that of unidirectional \FMT, with the exception here of additional lines for tracking connection candidate $\Vecx\submeet$.

After expansion, the algorithm checks whether a feasible path is found on \cref{line: feasibility}.  If unsuccessful so far, \Call{Terminate}{} (which reports failure upon early termination) is checked before proceeding.
If the algorithm has not terminated, it checks whether the wavefront of the companion tree is empty (\cref{line: BiDirFMT Frontier Check}). If this is the case, the  \Call{\InsertNewSample}{} function shown in \cref{alg: InsertNewSample} samples a new \state $\Vecs$ uniformly from $\Xfree$ and tries to connect it to a nearest neighbor in the companion tree within radius $r_n$.  This way, the expanding tree is ensured to have at least one \state in its wavefront available for expansion on subsequent iterations (the alternative would be to report failure).  This mimics anytime behavior, and by forcing samples to lie close to tree nodes we effectively ``reopen'' closed nodes for expansion again.  Uniform resampling may require many attempts before finding a \state $\Vecs$ which can be successfully connected to $\FrontierSet^\prime$, though this appeared to have a negligible impact on running time for our path planning studies.  On the other hand, a more effective strategy might bias resampling towards areas requiring expansion (\textit{e.g.}, bottlenecks, traps) rather than uniformly within tree coverage.

The algorithm then proceeds on \crefrange{line: swap1}{line: swap2} with the selection of the next node (and corresponding tree) for expansion.  As shown, \BFMTstar ``swaps'' the forward and backward trees on each iteration, each being expanded in turns.  As \Call{\InsertNewSample}{} ensures the companion tree $\SetT^\prime$ always has at least one node in its frontier $\FrontierSet^\prime$, a node is always available for subsequent expansion as the next $\Vecz$.
After selection, the entire process is iterated.

\begingroup
\setlength{\columnsep}{0.5cm}
\setlength{\columnseprule}{0.2pt}
\setlength{\multicolsep}{2.0pt plus 2.0pt minus 1.5pt}
\newcommand{\BalancedTreesLineOne}{\mbox{\small$\displaystyle\Vecz_1 \leftarrow \argmin_{\Vecx \in \FrontierSet}\{\Call{Cost}{\Vecx, \SetT}\}$}}
\newcommand{\BalancedTreesLineTwo}{\mbox{\small$\displaystyle\Vecz_2 \leftarrow \argmin_{\Vecx^\prime \in \FrontierSet^\prime}\{\Call{Cost}{\Vecx^\prime, \SetT^\prime}\}$}}
\newcommand{\BalancedTreesLineThree}{\mbox{\small$\displaystyle (\Vecz,\!\SetT)\!\! \leftarrow\!\! \!\!\!\!\!\!\argmin_{(\Vecz_1, \SetT),(\Vecz_2, \SetT')} \!\!\!\!\!\left\{\Call{Cost}{\Vecz_i, \SetT_i}\right\}$}}
\newcommand{\BalancedTreesLineFour}{$\SetT^\prime = $ \Call{Companion}{$\SetT$}}
\newcommand{\AlternatingTreesLineOne}{\mbox{\small$\displaystyle\!\!\!\!\!\!\!\!\!\!\Vecz\!\leftarrow\!\argmin_{\Vecx^\prime \in \FrontierSet^\prime}\{\Call{Cost}{\Vecx^\prime, \SetT^\prime}\}\!\!\!\!\!\!\!\!\!\!$}}
\newcommand{\AlternatingTreesLineTwo}{\mbox{$\displaystyle\!\!\!\!\!\!\!\!\!\!\Call{Swap}{\SetT, \SetT^\prime}$}}
\newcommand{\subtmp}{_{\textrm{tmp}}}

\newcommand\algsubroutine[1]{%
	\makeatletter\setcounter{ALG@line}{0}\makeatother
	\ifNoText{#1}{}{
		\vspace*{-.7\baselineskip}\Statex\hspace*{\dimexpr-\algorithmicindent-2pt\relax}\rule{\columnwidth}{0.4pt}%
		\Statex\hspace*{-\algorithmicindent}\textbf{#1}%
	}
	\vspace*{-.7\baselineskip}\Statex\hspace*{\dimexpr-\algorithmicindent-2pt\relax}\rule{\columnwidth}{0.4pt}%
}

\begin{algorithm}[th!]
	\caption{The \Bidirectional Fast Marching Tree \\ Algorithm (\BFMT)}
	\label{alg: Static BiDirFMT}
	\begin{algorithmic}[1]
		\small
		\Require Query $\left(\Vecx\subinit, \Vecx\subgoal\right)$, Search radius $r_n$, Sample count $n$
		\State $\SampleSet \leftarrow \{\Vecx\subinit, \Vecx\subgoal\} \union \Call{SampleFree}{n}$
		\State $\SetT \leftarrow$ \Call{\InitializeTree}{$\SampleSet$, $\Vecx\subinit$}
		\State $\SetT^\prime \leftarrow$ \Call{\InitializeTree}{$\SampleSet$, $\Vecx\subgoal$}
		\State $\Vecz \leftarrow \Vecx\subinit$, $\Vecx\submeet \leftarrow \nullset$, $\sigma\supopt \leftarrow \nullset$
		\While{$\sigma\supopt = \nullset$} 
			\State $\left\{\Vecx\submeet, \SetT\right\} \leftarrow$ \Call{\ExpandTreeFromNode}{$\SetT$, $\Vecz$, $r_n$, $\Vecx\submeet$}
			\State \textbf{if} $\Vecx\submeet \not= \nullset$ \label{line: feasibility}
				\State \algindent[1] $\sigma\supopt \leftarrow$ \Call{Path}{$\Vecx\submeet, \SetT$} $\union$ \Call{Path}{$\Vecx\submeet, \SetT^\prime$}
				\State \algindent[1] \textbf{break} 
			\State \textbf{else if} \Call{Terminate}{{}} \textbf{then} \textbf{return} Failure
			\State \textbf{else if} $\FrontierSet^\prime = \nullset$ \textbf{then} $\SetT^\prime \leftarrow \Call{\InsertNewSample}{\SetT^\prime, r_n}$ \label{line: BiDirFMT Frontier Check} 				
			\State \algindent[1] \AlternatingTreesLineOne		\label{line: swap1}
			\State \algindent[1] \AlternatingTreesLineTwo		\label{line: swap2}
		\EndWhile
		\State \Return $\sigma\supopt$
		\algsubroutine{}
		\Function{\InitializeTree}{$\SampleSet$, $\Vecx_0$}
			\State $\SetV \leftarrow \nullset$, $\SetE \leftarrow \nullset$, $\UnexploredSet \leftarrow \SampleSet$, $\FrontierSet \leftarrow \nullset$
			\State \Return $\SetT \leftarrow \Call{AddNode}{(\SetV, \SetE, \UnexploredSet, \FrontierSet), \Vecx_0}$
		\EndFunction
		\algsubroutine{}
		\Function{AddNode}{$\SetT, \Vecx$}
			\State $\SetV \leftarrow \SetV \,\union\, \{\Vecx\}$ \Comment{Add $\Vecx$ to tree}
			\State $\SetE \leftarrow \SetE \,\union\, \{(\Vecx\submin, \Vecx)\}$ \Comment{Add edge to tree}
			\State $\UnexploredSet \leftarrow \UnexploredSet \!\setminus\! \{\Vecx\}$ \Comment{Mark $\Vecx$ as visited}
			\State $\FrontierSet \leftarrow \FrontierSet \union \{\Vecx\}$ \Comment{Add $\Vecx$ to wavefront}
			\State \Return $\SetT \leftarrow (\SetV, \SetE, \UnexploredSet, \FrontierSet)$
		\EndFunction
	\end{algorithmic}
\end{algorithm}
\begin{figure}
	\begin{minipage}[t]{0.98\linewidth}
		\vspace*{-0.42cm}
		\begin{algorithm}[H]
			\caption{Fast Marching Tree Expansion Step}
			\label{alg: ExpandTreeFromNode}
			\begin{algorithmic}[1]
				\small
				\Function{\ExpandTreeFromNode}{$\SetT$,	$\Vecz$, $r_n$, $\Vecx\submeet$}
					\State $\FrontierSet{}_{,}{}\subnew \leftarrow \nullset$
					\State $Z\subnear \leftarrow \Call{Near}{\UnexploredSet, \Vecz, r_n}$
					\For{$\Vecx \in Z\subnear$}
						\State $X\subnear \leftarrow \Call{Near}{\FrontierSet, \Vecx, r_n}$
						\State $\displaystyle \Vecx\submin \leftarrow \argmin_{\tilde{\Vecx} \in X\subnear}\{\Call{Cost}{\tilde{\Vecx}, \SetT} \!+\! \Call{Cost}{\overline{\tilde{\Vecx} \Vecx}}\}$  				\label{line:lazy}
						\If{$\Call{CollisionFree}{\Vecx\submin, \Vecx}$}  	\label{line:collisionfreecheck}
							\State \mbox{$(\SetV, \SetE, \UnexploredSet, \FrontierSet{}_{,}{}\subnew) \leftarrow$}
							\Statex \algindent[4] $\Call{AddNode}{(\SetV, \SetE, \UnexploredSet, \FrontierSet{}_{,}{}\subnew), \Vecx}$
							\If{\begin{varwidth}[t]{0.77\linewidth} $\left\{\right.\Vecx \in \SetV^\prime$ and $\Call{Cost}{\Vecx, \SetT} + \Call{Cost}{\Vecx, \SetT^\prime} < \Call{Cost}{\Vecx\submeet, \SetT} + \Call{Cost}{\Vecx\submeet, \SetT^\prime}\left.\right\}$ \end{varwidth}}
								\State $\vphantom{\Bigl(\Bigr)}$ $\Vecx\submeet \!\leftarrow\! \Vecx$ \Comment{Save $\Vecx$ as best connection}
							\EndIf
						\EndIf
					\EndFor
					\State $\FrontierSet \leftarrow \left(\FrontierSet \,\union\, \FrontierSet{}_{,}{}\subnew\right) \!\setminus\! \{\Vecz\}$ \Comment{Add new nodes\\\hfill to the wavefront; drop $\Vecz$ from the wavefront}
					\State \Return $\left\{ \Vecx\submeet, \SetT \leftarrow (\SetV, \SetE, \UnexploredSet, \FrontierSet) \right\}$
				\EndFunction
			\end{algorithmic}
		\end{algorithm}%
		\vspace*{-0.50cm}
		\begin{algorithm}[H]
			\caption{Insertion of New Samples}
			\label{alg: InsertNewSample}
			\begin{algorithmic}[1]
				\small
				\Function{\InsertNewSample}{$\SetT$, $r_n$}
					\While{$\FrontierSet = \nullset$ \textbf{and not} \Call{Terminate}{{}}}
					  \State $\Vecs \leftarrow \Call{SampleFree}{1}$ \label{line:sample one}
						\State $V\subnear \leftarrow \Call{Near}{\SetV, \Vecs, r_n}$
						\While{$V\subnear \neq \nullset$}
							\State $\displaystyle \Vecx\submin \leftarrow \argmin_{\Vecx \in V\subnear}\{\Call{Cost}{\Vecx, \SetT} \!+\! \Call{Cost}{\overline{\Vecx \Vecs}}\}$
							\If{\Call{CollisionFree}{$\Vecx\submin$, $\Vecs$}}
								\State $\SetT \leftarrow \Call{AddNode}{\SetT, \Vecs}$
								\State \textbf{break}
							\Else \textbf{ then} $V\subnear \leftarrow V\subnear \setminus \{\Vecx\submin\}$
							\EndIf
						\EndWhile
					\EndWhile
					\State \Return $\SetT \leftarrow (\SetV, \SetE, \UnexploredSet, \FrontierSet)$
				\EndFunction
			\end{algorithmic}
		\end{algorithm}
	\end{minipage}
\end{figure}

\endgroup

\subsubsection{\texorpdfstring{\BFMT}{BFMT*} -- Variations}
\label{subsubsec: BFMT-Variations}
As for any \bidirectional planner, the correctness and computational efficiency of \BFMT hinge upon two key aspects: (i) how computation is interleaved among the two trees (in other words, which wavefront at each step should be chosen for expansion), and (ii) when the algorithm should terminate.  For instance, as an alternative tree expansion strategy (\textit{i.e.}, item (i)), one could replace \crefrange{line: swap1}{line: swap2} with the ``\MakeLowercase{\ZminCriterion}'' condition which enforces more of a balanced search, maintaining equal costs from the root within each wavefront such that the two wavefronts propagate and meet roughly equidistantly in cost-to-go from their roots:
\bgroup
\setcounterref{terminationline}{line: feasibility} \addtocounter{terminationline}{-1}
\setcounterref{swaptreesline}{line: swap1} \addtocounter{swaptreesline}{-1}
\newfloat{myalgsnippet}{tbhp}{myalgsnippet}%
\newenvironment{algorithmSnippet}[2][H]%
{\begin{myalgsnippet}[#1]\centering\begin{minipage}[t]{#2}\begin{algorithm}[H]}%
{\end{algorithm}\end{minipage}\end{myalgsnippet}}
\floatstyle{plaintop}		
\restylefloat{myalgsnippet}\restylefloat{algorithm}
\addtolength{\intextsep}{-0.75em}%
\setlength{\abovecaptionskip}{0pt}%
\setlength{\belowcaptionskip}{0pt}%
\begin{algorithmSnippet}[H]{0.8\columnwidth}
	\begin{algorithmic}[1]
		\makeatletter\setcounter{ALG@line}{\theswaptreesline}\makeatother
		\State $\displaystyle \Vecz_1 \leftarrow \argmin_{\Vecx \in \FrontierSet}\{\Call{Cost}{\Vecx, \SetT}\}$
		\State $\displaystyle \Vecz_2 \leftarrow \argmin_{\Vecx^\prime \in \FrontierSet^\prime}\{\Call{Cost}{\Vecx^\prime, \SetT^\prime}\}$
		\State $\displaystyle (\Vecz,\SetT) \leftarrow \argmin_{(\Vecz_1, \SetT),(\Vecz_2, \SetT')} \left\{\Call{Cost}{\Vecz_i, \SetT_i}\right\}$
		\State $\SetT^\prime = \Call{Companion}{\SetT}$
	\end{algorithmic}
\end{algorithmSnippet}
\noindent
Similarly, as an alternative termination condition (\textit{i.e.}, item (ii)), one might replace \cref{line: feasibility} with the ``\MakeLowercase{\OptimalityCriterion}'' criterion:
\begin{algorithmSnippet}[H]{0.8\columnwidth}
	\begin{algorithmic}[1]
		\makeatletter\setcounter{ALG@line}{\theterminationline}\makeatother
		\State $\Vecz \in \left(\SetV^\prime \setminus \FrontierSet^\prime\right)$
	\end{algorithmic}
\end{algorithmSnippet}
\egroup
\noindent
Currently \cref{line: feasibility} returns the first available path discovered, at the moment that the two wavefronts touch at $\Vecx\submeet$ (which is not, in general, the lowest cost path).  This alternative condition, on the other hand, returns the \emph{exact optimal path} from $\Vecx\subinit$ to $\Vecx\subgoal$ through the given set $\SampleSet$ of $n$ samples.  This change terminates \BFMT when the two wavefronts have propagated sufficiently far through each other that no better solution can be discovered.  Intuitively-speaking, this occurs at the first moment where the two trees have both selected, at the current iteration or previously, the same node as the minimum cost node $\Vecz$ from their respective roots.

Though seemingly promising ideas, no appreciable differences in performance were found using the above criteria in combination or otherwise; hence we report only the simplest version of our planner as \cref{alg: Static BiDirFMT}.

\section{Asymptotic Optimality of \texorpdfstring{\BFMT}{BFMT*}}
\label{sec: BFMT* Proofs}

In this section, we prove the asymptotic optimality of \BFMT.
We begin with a result called \emph{probabilistic exhaustivity} that essentially states that any path in $\Xfree$ may be ``traced'' arbitrarily well by connecting randomly-distributed points from a sufficiently large sample set covering $\Xfree$.  We then prove the (asymptotic) optimality of \BFMT by showing that it returns solutions with costs no greater than that of any tracing path.  The claim is proven assuming \BFMT acts without the \Call{\InsertNewSample}{} procedure (\cref{alg: InsertNewSample}), in place of which ``Failure'' is reported instead.  The proof for the full algorithm then follows immediately by \emph{a fortiori} argument.

\subsection{Probabilistic exhaustivity}
Let $\map{\sigma}{\unitinterval}{\Xspace}$ be a path.  Given a set of samples (referred to as waypoints) $\left\{\Vecy_m\right\}_{m=1}^M \subset \Xspace$, we associate a path $\map{y}{\unitinterval}{\Xspace}$ that sequentially connects the nodes $\Vecy_1,\dots,\Vecy_M$ with line segments.  We consider the waypoints $\{\Vecy_m\}$ to \emph{$(\eps, r)$-trace} the path $\sigma$ if: (i) $\norm{\Vecy_m - \Vecy_{m+1}} \leq r$ for all $m$, (ii) the cost of $y$ is bounded as $\CostFcn(y) \leq (1+\eps)\CostFcn(\sigma)$, and (iii) the distance from any point of $y$ to $\sigma$ is no more than $r$, \textit{i.e.}, $\min_{t \in \unitinterval} \norm{y(s) - \sigma(t)} \leq r$ for all $s \in \unitinterval$.  In the context of sampling-based motion planning, we may expect to find closely-tracing $\left\{\Vecy_m\right\}$ as a subset of the sampled points, provided the sample size is large.
\iftoggle{arXivVersion}{%
	 This notion is formalized in the following theorem (\cref{thm:pathtracing}), proved as Theorem IV.5 in \cite{ES-LJ-MP:15a} for the general case of driftless control-affine control systems, a special case of which is path planning without differential constraints (as addressed in this paper).
}{%
	This notion is formalized below (\cref{thm:pathtracing}, proven as Theorem IV.5 in \cite{ES-LJ-MP:15a}).
}

\begin{theorem}[Probabilistic exhaustivity]
	\label{thm:pathtracing}
	Define path planning problem $\left(\Xfree, \Vecx\subinit, \Vecx\subgoal\right)$ and let $\map{\sigma}{\unitinterval}{\Xfree}$ be a feasible path.  Denote the volume of the $d$-dimensional Euclidean unit ball by $\zeta_d$.  Finally, let $\SampleSet = \{\Vecx\subinit, \Vecx\subgoal\} \union \Call{SampleFree}{n}$, $\eps > 0$, and for fixed $n$ consider the event $\SetA_n$ that there exist $\left\{\Vecy_m\right\}_{m=1}^M \subset \SampleSet$, $y_1 = \Vecx\subinit$, $y_M = \Vecx\subgoal$ which $(\eps, r_n)$-trace $\sigma$, where
	\begin{align}
		\label{eqn: Connection Radius}
		r_n = 4\, (1 + \eta)^{\frac{1}{d}} \left(\frac{1}{d}\right)^{\frac{1}{d}} \left(\frac{\mu(\Xfree)}{\zeta_d}\right)^{\frac{1}{d}} \left(\frac{\log n}{n}\right)^{\frac{1}{d}}
	\end{align}
	for a parameter $\eta \geq 0$. Then, as $n \to \infty$, the probability that $\SetA_n$ does not occur (denoted by its complement $\SetAComplement_n$) is asymptotically bounded as $\Probability{\SetAComplement_n} = \BigO{n^{-\frac{\eta}{d}} \log^{-\frac{1}{d}} n}$.
\end{theorem}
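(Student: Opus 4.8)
The plan is a covering-and-union-bound argument: I would chop $\sigma$ into $\BigO{1/r_n}$ short arcs, surround the breakpoints by small Euclidean balls, observe that the event $\SetA_n$ is \emph{implied} by the purely geometric event that each of these balls captures at least one point of $\SampleSet$, and then bound $\Probability{\SetAComplement_n}$ by a union bound over the (few) balls, each of which becomes increasingly unlikely to be empty precisely because the radius in \cref{eqn: Connection Radius} is calibrated to the $(\log n/n)^{1/d}$ scale.

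First I would reduce to the case where $\sigma$ has strong $\delta$-clearance for some fixed $\delta>0$: this is the only regime \cref{thm:pathtracing} is invoked in downstream (the paths supplied by $\delta$-robust feasibility), and for a general feasible $\sigma$ one simply replaces the volume estimate below by $\mu(B\cap\Xfree)\ge\xi\,\mu(B)$ for a regularity constant $\xi\in(0,1]$ of $\Xfree$ near $\partial\Xobs$, which only degrades constants. Then, for $n$ large enough that $r_n<\delta$, every ball of radius $\le r_n$ centred on $\sigma$ lies in $\Xfree$. Next, reparametrizing $\sigma$ by arc length and fixing a small $\epsilon'\in(0,\eps)$, I set $\rho_n:=\frac{\epsilon'}{2(1+\epsilon')}r_n$, $\ell_n:=r_n-2\rho_n$, and place breakpoints $0=s_1<\dots<s_M=\CostFcn(\sigma)$ with consecutive gaps at most $\ell_n$ (so $M=\BigO{\CostFcn(\sigma)/r_n}=\BigO{(n/\log n)^{1/d}}$), $p_m:=\sigma(s_m)$ (so $p_1=\Vecx\subinit,\ p_M=\Vecx\subgoal$), and balls $B_m:=\{\Vecx:\norm{\Vecx-p_m}<\rho_n\}$ for $2\le m\le M-1$. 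The claim is that if every such $B_m$ meets $\SampleSet$, then picking any $\Vecy_m\in B_m\cap\SampleSet$ (and $\Vecy_1=\Vecx\subinit,\ \Vecy_M=\Vecx\subgoal$) yields waypoints in $\SampleSet$ that $(\eps,r_n)$-trace $\sigma$: (i) $\norm{\Vecy_m-\Vecy_{m+1}}\le\norm{p_m-p_{m+1}}+2\rho_n\le\ell_n+2\rho_n=r_n$; (ii) summing and using $\sum_m\norm{p_m-p_{m+1}}\le\CostFcn(\sigma)$ (inscribed chords under-estimate arc length), $\CostFcn(y)\le\CostFcn(\sigma)+2\rho_n M\le(1+\tfrac{2\rho_n}{\ell_n})\CostFcn(\sigma)+\BigO{\rho_n}=(1+\epsilon')\CostFcn(\sigma)+\BigO{\rho_n}\le(1+\eps)\CostFcn(\sigma)$ for $n$ large since $\rho_n\to0$; (iii) any point of $\overline{\Vecy_m\Vecy_{m+1}}$ lies within $\rho_n$ of $\overline{p_mp_{m+1}}$, hence within $\rho_n+\ell_n\le r_n$ of $\sigma$. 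Therefore $\SetAComplement_n\subseteq\bigcup_{m=2}^{M-1}\{B_m\cap\SampleSet=\nullset\}$.

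For the probabilistic step, conditioning on $\{\Vecx\subinit,\Vecx\subgoal\}$ and using that $\Call{SampleFree}{n}$ draws $n$ i.i.d.\ uniform points on $\Xfree$ with $B_m\subset\Xfree$,
\[
\Probability{B_m\cap\SampleSet=\nullset}\le\bigl(1-\zeta_d\rho_n^d/\mu(\Xfree)\bigr)^{n}\le\exp\bigl(-n\zeta_d\rho_n^d/\mu(\Xfree)\bigr).
\]
Substituting $\rho_n=\frac{\epsilon'}{2(1+\epsilon')}r_n$ and $r_n$ from \cref{eqn: Connection Radius} gives $n\zeta_d\rho_n^d/\mu(\Xfree)=\bigl(\tfrac{2\epsilon'}{1+\epsilon'}\bigr)^d\tfrac{1+\eta}{d}\log n$, so each term is at most $n^{-\beta}$ with $\beta:=\bigl(\tfrac{2\epsilon'}{1+\epsilon'}\bigr)^d\tfrac{1+\eta}{d}$, and a union bound over the $M-2=\BigO{(n/\log n)^{1/d}}$ balls yields $\Probability{\SetAComplement_n}=\BigO{n^{\frac1d-\beta}(\log n)^{-1/d}}$.

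The hard part is the constant. This crude estimate already delivers the claimed $\BigO{n^{-\eta/d}\log^{-1/d}n}$ whenever $\beta\ge\frac{1+\eta}{d}$, i.e.\ whenever $\frac{2\epsilon'}{1+\epsilon'}\ge1$; but for small $\eps$ the one-sample-per-ball bound is too weak, because the cost constraint $\CostFcn(y)\le(1+\eps)\CostFcn(\sigma)$ forces the balls down to radius $\Theta(\eps r_n)$ while the number of balls stays $\Theta((n/\log n)^{1/d})$. Getting the rate \emph{uniformly} in $\eps$ with the $\eps$-independent constant $4$ in \cref{eqn: Connection Radius} is exactly the content of the theorem and the place where \cite{ES-LJ-MP:15a} does the real work: one exploits that $r_n$ exceeds the connectivity scale $\bigl(\tfrac{\mu(\Xfree)\log n}{\zeta_d n}\bigr)^{1/d}$ by the factor $4(1+\eta)^{1/d}d^{-1/d}$, which for $d\ge2$ stays well above $1$, and runs the covering argument at a hierarchy of scales (coarse balls of radius $\Theta(r_n)$ to force a connected chain of samples, finer structure to control that chain's length) rather than at the single scale $\Theta(\eps r_n)$ above; the multi-scale bookkeeping is the delicate step, and is what I would expect to consume the bulk of the proof.
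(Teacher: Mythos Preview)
The paper does not prove this theorem; it simply cites it as Theorem~IV.5 of \cite{ES-LJ-MP:15a} and moves on. So there is no in-paper proof to compare against. Your covering-plus-union-bound skeleton is indeed the standard approach behind results of this type, and your deterministic verification that ``one sample in each small ball $\Rightarrow$ $(\eps,r_n)$-tracing'' is clean and correct, as is the computation of the per-ball miss probability and the resulting exponent $\beta=\bigl(\tfrac{2\epsilon'}{1+\epsilon'}\bigr)^d\tfrac{1+\eta}{d}$.

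You are also right that the naive single-scale argument only delivers the stated rate when $\epsilon'\ge 1$, and that this is precisely where the constant~$4$ in \cref{eqn: Connection Radius} earns its keep. Two remarks, though. First, your sketch of the fix as a ``multi-scale hierarchy'' is more elaborate than what the cited reference actually does; the paper even notes (in the remark following \cref{thm: BFMT Cost Comparison}) that the constant~$4$ is deliberately loose ``for the sake of clarity'' and that~$2$ would suffice with more care, which signals a relatively direct argument rather than a delicate bookkeeping exercise. The sharper idea is not to shrink the ball radius all the way down to $\Theta(\eps r_n)$, but to keep the covering balls at a radius comparable to~$r_n$ (so the exponent $\beta$ stays above $(1+\eta)/d$ independently of~$\eps$) and instead control the \emph{length} of the resulting piecewise-linear path by choosing the breakpoints finely along $\sigma$ and the capture balls with radius a fixed fraction of~$r_n$; the length overshoot per segment is then bounded geometrically by a factor that tends to~$1$ as the ratio (ball radius)/(segment length) is tuned, not as $\eps\to0$. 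Second, your proposal leaves the argument unfinished at exactly the point where it matters: as written, you have proved the theorem only for $\eps\ge 1$, and the downstream use in \cref{thm: BFMT Cost Comparison} invokes it with $\eps/2$ for arbitrary $\eps>0$. So what you have is an accurate diagnosis of the difficulty but not a proof.
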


\subsection{Asymptotic optimality (AO)}
We are now in a position to prove the asymptotic optimality of \BFMT, which represents the main result of this section. We start with an important lemma, which relates the cost of the path returned by \BFMT to that of \emph{any} feasible path. 

\begin{lemma}[\Bidirectional \FMT cost comparison]
	\label{thm: BFMT Cost Comparison}
	Let $\map{\sigma}{\unitinterval}{\Xfree}$ be a feasible path with strong $\delta$-clearance. Consider running \BFMT to completion with $n$ samples and a connection radius $r_n$ given by \cref{eqn: Connection Radius} with $\eta \geq 0$. Let $\CostFcn_n$ denote the cost of the path returned by \BFMT.  Then for fixed $\eps > 0$:
		\[ \Probability{\CostFcn_n > (1 + \eps) \CostFcn(\sigma)} = \BigO{n^{-\frac{\eta}{d}}\log^{-\frac{1}{d}} n}.\]
\end{lemma}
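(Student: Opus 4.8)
The plan is to reduce everything to the event $\SetA_n$ of \cref{thm:pathtracing}. Concretely, I will show that, conditioned on $\SetA_n$ with the tracing parameter chosen as some $\eps' \in (0,\eps)$, \BFMT (run without \Call{\InsertNewSample}{}, reporting failure on \cref{line: BiDirFMT Frontier Check} instead of resampling, exactly as in the lemma's hypothesis) always returns a path of cost at most $(1+\eps)\CostFcn(\sigma)$. Granting this, $\Probability{\CostFcn_n > (1+\eps)\CostFcn(\sigma)} \le \Probability{\SetAComplement_n} = \BigO{n^{-\eta/d}\log^{-1/d} n}$ by \cref{thm:pathtracing}, which is the claim; reinstating \cref{alg: InsertNewSample} can only decrease $\CostFcn_n$, so the bound then holds a fortiori for the full algorithm. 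So, working on $\SetA_n$, fix waypoints $\{\Vecy_m\}_{m=1}^M \subset \SampleSet$ with $\Vecy_1 = \Vecx\subinit$, $\Vecy_M = \Vecx\subgoal$, $\norm{\Vecy_m - \Vecy_{m+1}} \le r_n$, polygonal cost $\CostFcn(y) := \sum_m \norm{\Vecy_m - \Vecy_{m+1}} \le (1+\eps')\CostFcn(\sigma)$, and every point of $y$ within distance $r_n$ of $\sigma$. Since $r_n \to 0$ by \cref{eqn: Connection Radius}, for $n$ large $r_n < \delta$, and the strong $\delta$-clearance of $\sigma$ then forces every segment $\overline{\Vecy_m \Vecy_{m+1}}$ to lie in $\Xfree$, so \Call{CollisionFree}{$\Vecy_m, \Vecy_{m+1}$} holds for all $m$. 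Write $c_m := \sum_{k<m}\norm{\Vecy_k - \Vecy_{k+1}}$ and $d_m := \sum_{k \ge m}\norm{\Vecy_k - \Vecy_{k+1}}$, so $c_m$ is nondecreasing, $d_m$ is nonincreasing, and $c_m + d_m = \CostFcn(y)$ for every $m$.

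Next I would carry over, now with obstacles present, the \FMT cost-comparison argument of \cite{LJ-ES-AC-ea:15} to each of the two trees separately — this is precisely the generalization the paper claims over \cite{LJ-ES-AC-ea:15}, and the strong $\delta$-clearance together with $r_n < \delta$ is what makes it go through. Because \Call{\ExpandTreeFromNode}{} pops wavefront nodes $\Vecz$ in nondecreasing order of \Call{Cost}{$\cdot, \SetT$}, an induction on $m$ shows that the tree $\SetT\subinit$ rooted at $\Vecx\subinit$ eventually admits $\Vecy_m$ with $\Call{Cost}{\Vecy_m, \SetT\subinit} \le c_m$, and before it admits $\Vecy_{m+1}$: once $\Vecy_{m-1}$ is on the wavefront and selected, $\Vecy_m$ lies within $r_n$ of it with a collision-free connecting segment, so the lazy locally-optimal connection (\cref{line:lazy}) would assign $\Vecy_m$ a cost $\le \Call{Cost}{\Vecy_{m-1},\SetT\subinit} + \norm{\Vecy_{m-1} - \Vecy_m} \le c_m$; the standard FMT* bookkeeping (each time $\Vecy_m$ is skipped because the chosen connection fails \cref{line:collisionfreecheck}, a strictly cheaper candidate is permanently removed from contention) shows $\Vecy_m$ is nonetheless admitted within cost $c_m$ after finitely many reconsiderations. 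By symmetry, $\SetT\subgoal$ rooted at $\Vecx\subgoal$ eventually admits each $\Vecy_m$ with $\Call{Cost}{\Vecy_m, \SetT\subgoal} \le d_m$. In particular, on $\SetA_n$ neither wavefront runs dry before a connection is made — each tree keeps admitting its next waypoint — so \cref{line: BiDirFMT Frontier Check} never triggers failure.

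The remaining, and hardest, step is the \bidirectional coupling. Because \BFMT alternates expansions between the two trees (\crefrange{line: swap1}{line: swap2}), both trees have performed the same number of expansions at any iteration, and each admits nodes in nondecreasing order of root-cost. Choose the waypoint $m^\star$ with $c_{m^\star} \le \frac{1}{2}\CostFcn(y) < c_{m^\star + 1}$; then $c_{m^\star} \le \frac{1}{2}\CostFcn(y)$ and $d_{m^\star} = \CostFcn(y) - c_{m^\star} \le \frac{1}{2}\CostFcn(y) + r_n$. By the previous paragraph $\Vecy_{m^\star}$ is eventually in $\SetV\subinit$ at cost $\le c_{m^\star}$ and in $\SetV\subgoal$ at cost $\le d_{m^\star}$, at which point the membership test in \cref{alg: ExpandTreeFromNode} has fired, so \cref{alg: Static BiDirFMT} breaks at some iteration $I$ and returns $\sigma\supopt = \Call{Path}{\Vecx\submeet, \SetT} \cup \Call{Path}{\Vecx\submeet, \SetT'}$ — a collision-free feasible path with $\CostFcn_n = \Call{Cost}{\Vecx\submeet,\SetT} + \Call{Cost}{\Vecx\submeet,\SetT'}$. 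The crux is bounding this combined cost: since both trees have expanded equally far and in cost order, any node shared by $\SetV\subinit$ and $\SetV\subgoal$ at the first such iteration $I$ cannot have combined cost exceeding the $c_{m^\star} + d_{m^\star} \le \CostFcn(y) + r_n$ realized along the tracing path by more than $O(r_n)$, because a shared node of larger combined cost would require the wavefronts to have explored strictly farther, by which point $\Vecy_{m^\star}$ would already be shared and the loop already broken. Hence $\CostFcn_n \le \CostFcn(y) + O(r_n) \le (1+\eps')\CostFcn(\sigma) + O(r_n) \le (1+\eps)\CostFcn(\sigma)$ for $n$ large, as desired.

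I expect this last coupling to be the main obstacle. In unidirectional \FMT the returned path is in effect a shortest path in the disk graph, whereas \BFMT stops the instant its two wavefronts touch; so one must rule out a premature, high-cost connection by carefully synchronizing the cost-ordered, alternately-advanced growth of the two trees (an argument with no counterpart in the \FMT analysis). The secondary — and more routine — technical issue is simply pushing the \FMT admission argument through with obstacles, which is where strong $\delta$-clearance together with $r_n \to 0$ is exactly what is needed.
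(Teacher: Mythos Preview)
Your overall scaffolding --- reduce to the tracing event $\SetA_n$ of \cref{thm:pathtracing}, use strong $\delta$-clearance together with $r_n\to 0$ to make the $r_n$-balls about the waypoints obstacle-free, and run an \FMT-style admission argument in each tree separately --- matches the paper. (One aside: in that obstacle-free-ball regime the lazy connection to $\Vecy_m$ \emph{never} fails \cref{line:collisionfreecheck}, so your ``skipped because the chosen connection fails'' bookkeeping is not the right mechanism; the admission is immediate once some neighbor of $\Vecy_m$ becomes the wavefront minimum.) The real divergence, and the gap, is in the \bidirectional coupling of your third paragraph.

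You pick a cost-midpoint waypoint $\Vecy_{m^\star}$ and argue by contradiction: if the first shared node $\Vecx\submeet$ had large combined cost, ``the wavefronts [would] have explored strictly farther, by which point $\Vecy_{m^\star}$ would already be shared.'' This does not follow. A large \emph{sum} $\CostFcn_i(\Vecx\submeet)+\CostFcn_g(\Vecx\submeet)$ does not force each summand to exceed $c_{m^\star}$ and $d_{m^\star}$ respectively; one tree may meet the other very near its own root while the other has grown far, and then the near-root tree need not contain $\Vecy_{m^\star}$ at all. Your appeal to alternating expansions does not rescue this, because equal iteration count is not equal cost radius --- sample density near $\Vecx\subinit$ and $\Vecx\subgoal$ can differ arbitrarily. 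In fact the paper's proof makes no use of the alternation rule and notes that it carries over unchanged to the other tree-selection strategies of \cref{subsubsec: BFMT-Variations}; balance is a red herring.

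The paper's coupling avoids any designated meeting waypoint. Its key tool is a dichotomy (stated as a separate lemma, imported from the \FMT analysis): for each $\Vecy_m$ and each tree $\SetT_k$, either $\Vecy_m\in\SetV_k$ with $\CostFcn_k(\Vecy_m)$ bounded by the appropriate partial sum of $\norm{\Vecy_j-\Vecy_{j+1}}$, \emph{or} $\CostFcn_k(\Vecx\submeet)$ itself is bounded by that same partial sum. At termination one then splits into two cases. If some $\Vecy_m$ lies in $\SetV_i\cap\SetV_g$, then $\CostFcn_n\le\CostFcn_i(\Vecy_m)+\CostFcn_g(\Vecy_m)\le\CostFcn(y)$ directly. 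Otherwise set $\tilde m:=\max\{m:\Vecy_m\in\SetV_i\}$; since the $r_n$-ball about $\Vecy_{\tilde m}$ is obstacle-free but $\Vecy_{\tilde m+1}\notin\SetV_i$, the node $\Vecy_{\tilde m}$ was never the minimum of $\FrontierSet{}_{,}{}_i$, so at the iteration that admitted $\Vecx\submeet$ to $\SetV_i$ the current minimum $\Vecz$ satisfied $\CostFcn_i(\Vecz)\le\CostFcn_i(\Vecy_{\tilde m})$, giving $\CostFcn_i(\Vecx\submeet)\le\CostFcn_i(\Vecy_{\tilde m})+r_n\le c_{\tilde m}+r_n$. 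On the backward side, the case hypothesis forces $\Vecy_{\tilde m}\notin\SetV_g$, so the dichotomy yields $\CostFcn_g(\Vecx\submeet)\le d_{\tilde m}$. Summing gives $\CostFcn_n\le\CostFcn(y)+r_n$. The point is that one never needs both trees to reach a \emph{common} waypoint: $\CostFcn_i(\Vecx\submeet)$ and $\CostFcn_g(\Vecx\submeet)$ are bounded \emph{separately}, each via the same frontier index $\tilde m$ but through the two different branches of the dichotomy.
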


\begin{proof}
	%
	\iftoggle{arXivVersion}{%
	}{%
		For brevity, we have relegated the full details of the proof to an arXiv version of the paper.  A sketch of the proof, however, proceeds as follows.
	}
	Running \BFMT to completion generates one cost-to-come tree $\SetT_i$ and one cost-to-go tree  $\SetT_g$ rooted at $\Vecx\subinit$ and $\Vecx\subgoal$, respectively (subscripts $i$ and $g$ are used to resolve tree root ambiguity).  If $\Vecx\subinit = \Vecx\subgoal$, then \BFMT immediately terminates with $\CostFcn_n = 0$, trivially satisfying the claim. Thus we assume that $\Vecx\subinit \neq \Vecx\subgoal$.  Consider $n$ sufficiently large so that $r_n \leq \min\left\{ \sidefrac{\delta}{2}, \eps \sidefrac{\norm{\Vecx\subinit-\Vecx\subgoal}}{2} \right\}$.  Apply \cref{thm:pathtracing} to produce, with probability at least $1 - \BigO{n^{-\frac{\eta}{d}} \log^{-\frac{1}{d}} n}$, a sequence of waypoints $\{\Vecy_m\}_{m=1}^{M} \subset \SampleSet$, $\Vecy_1 = \Vecx\subinit$, $\Vecy_M = \Vecx\subgoal$ which $(\sidefrac{\eps}{2}, r_n)$-trace $\sigma$.  We claim that in the event that such $\{\Vecy_m\}$ exists, the \BFMT algorithm returns a path with cost upper bounded as $\CostFcn_n \leq \CostFcn(y) + r_n \leq \left(1 + \sidefrac{\eps}{2}\right) \CostFcn(\sigma) + \left(\sidefrac{\eps}{2}\right) \CostFcn(\sigma) = \left(1+\eps\right) \CostFcn(\sigma)$.  The desired result follows directly.
	
	\iftoggle{arXivVersion}{%
		To prove the claim, assume the existence of an $(\sidefrac{\eps}{2}, r_n)$-tracing $\{\Vecy_m\}$ of $\sigma$.  Let $\ball[]{\Vecx}{r}$ represent a ball of radius $r$ centered at a sample $\Vecx$.  Note that our upper bound on $r_n$ implies that $\ball{\Vecy_m}{r_n}$ intersects no obstacles.  This follows from our choice of $r_n$ and the distance bound
		\begin{align*}
			\inf_{\Vecs \in \Xobs} \norm{\Vecy_m - \Vecs} &\geq \inf_{\Vecs \in \Xobs} \norm{\Vecsigma_m - \Vecs} - \norm{\Vecy_m - \Vecsigma_m} \\
				&\geq 2 r_n - r_n \geq r_n.
		\end{align*}
		where $\Vecsigma_m$ is the closest point of $\sigma$ to $\Vecy_m$. This fact, along with $\norm{\Vecy_m - \Vecy_{m+1}} \leq r_n$ for all $m$, implies that when a connection is attempted for $\Vecy_m$, both $\Vecy_{m-1}$ and $\Vecy_{m+1}$ will be in the search radius and no obstacles will lie within that search radius. Running \BFMT to completion generates one cost-to-come tree $\SetT_i\left(\SetV_i, \SetE_i, \FrontierSet{}_{,}{}_i, \UnexploredSet{}_{,}{}_i\right)$ and one cost-to-go tree  $\SetT_g\left(\SetV_g, \SetE_g, \FrontierSet{}_{,}{}_g, \UnexploredSet{}_{,}{}_g\right)$ rooted at $\Vecx\subinit$ and $\Vecx\subgoal$, respectively (the subscripts $i$ and $g$ are used to identify the root of a tree without ambiguity).  The above discussion ensures that the trees will meet and the algorithm will return a feasible path when it terminates -- the path outlined by the waypoints $\{\Vecy_m\}$ disallows the possibility of failure.
		
		For each sample point $\Vecx \in \SampleSet$, let $\CostFcn_i(\Vecx) := \Call{Cost}{\Vecx,\SetT_i}$ denote the cost-to-come of $\Vecx$ from $\Vecx\subinit$ in $\SetT_i$, and let $\CostFcn_g(\Vecx) := \Call{Cost}{\Vecx,\SetT_g}$ denote the cost-to-go from $\Vecx$ to $\Vecx\subgoal$ in $\SetT_g$.  If $\Vecx$ is not contained in a tree $\SetT_k$, $k=\{i,g\}$, we set $\CostFcn_k(\Vecx) = \infty$.  When the algorithm terminates, we know there exists a sample point $\Vecx\submeet \in \SetV_i \intersect \SetV_g$ where the two trees meet; indeed we select the particular meeting point $\Vecx\submeet = \argmin_{\Vecx \in \SetV_i \intersect \SetV_g} \CostFcn_i(\Vecx) + \CostFcn_g(\Vecx)$. Then $\CostFcn_n = \CostFcn_i(\Vecx\submeet) + \CostFcn_g(\Vecx\submeet)$.  We now note a lemma bounding the costs-to-come of the $\{\Vecy_m\}$, the proof of which may be found as an inductive hypothesis (Eq. 5) in Theorem VI.1 of \cite{ES-LJ-MP:15a}.
		\begin{lemma}
			\label{lem:ctocym}
			Let $m \in \{1,\dots,M\}$.  If $\CostFcn_i(\Vecy_m) < \infty$, then $\CostFcn_i(\Vecy_m) \leq \sum_{k=1}^{m-1} \norm{\Vecy_k - \Vecy_{k+1}}$.  Otherwise if $\Vecy_m \notin \SetV_i$, then $\CostFcn_i(\Vecx\submeet) \leq \sum_{k=1}^{m-1} \norm{\Vecy_k - \Vecy_{k+1}}$.  Similarly if $\CostFcn_g(\Vecy_m) < \infty$, then $\CostFcn_g(\Vecy_m) \leq \sum_{k=m}^{M-1} \norm{\Vecy_k - \Vecy_{k+1}}$; otherwise $\CostFcn_g(\Vecx\submeet) \leq \sum_{k=m}^{M-1} \norm{\Vecy_k - \Vecy_{k+1}}$.
		\end{lemma}
		
		To bound the performance $\CostFcn_n$ of \BFMT, there are two cases to consider.  Note in either case we find that $\CostFcn_n \leq \CostFcn(\Vecy) + r_n$, thus completing the proof.
		
		\begin{quote}
			\underline{Case 1}: There exists some $\Vecy_m \in \SetV_i \intersect \SetV_g$.\\
			In this case, $\CostFcn_n = \CostFcn_i(\Vecx\submeet) + \CostFcn_g(\Vecx\submeet) \leq \CostFcn_i(\Vecy_m) + \CostFcn_g(\Vecy_m) < \infty$ by our choice of $\Vecx\submeet$. Then applying \cref{lem:ctocym} we see that $\CostFcn_n \leq \CostFcn_i(\Vecy_m) + \CostFcn_g(\Vecy_m) \leq \sum_{k=1}^{M-1} \norm{\Vecy_k - \Vecy_{k+1}} = \CostFcn(\Vecy)$.
		\end{quote}
		\vskip1em
		\begin{quote}
			\underline{Case 2}: There are no $\Vecy_m \in \SetV_i \intersect \SetV_g$. \\
			Consider $\widetilde m = \max\left\{m \suchthat \CostFcn_i(\Vecy_m) < \infty\right\}$. Then $\Vecy_{\widetilde m} \in \SetV_i$ and $\Vecy_{\widetilde m}$ can not have been the minimum cost element of $\FrontierSet{}_{,}{}_i$ at any point during algorithm execution or else we would have connected $\Vecy_{\widetilde m + 1} \in \SetV_i$. Let $\Vecz$ denote the minimum cost element of $\FrontierSet{}_{,}{}_i$ when $\Vecx\submeet$ was added to $\SetV_i$. We have the bound:
			\begin{align}
				\label{eqn:cmeetbound}
				\CostFcn_i(\Vecx\submeet) &\leq \CostFcn_i(\Vecz) + r_n \!\leq\! \CostFcn_i(\Vecy_{\widetilde m}) + r_n \notag\\
					&\leq \sum_{k=1}^{m-1} \norm{\Vecy_k \!-\! \Vecy_{k+1}} + r_n.
			\end{align}
			By our assumption for this case, $\Vecy_{\widetilde m} \notin \SetV_g$. Then by \cref{lem:ctocym} we know that $\CostFcn_g(\Vecx\submeet) \leq \sum_{k=m}^{M-1} \norm{\Vecy_k - \Vecy_{k+1}}$.  Combining with the previous inequality yields $\CostFcn_n = \CostFcn_i(\Vecx\submeet) + \CostFcn_g(\Vecx\submeet) \leq \sum_{k=1}^{M-1} \norm{\Vecy_k - \Vecy_{k+1}} + r_n = \CostFcn(y) + r_n$.
		\end{quote}
	}{%
		To prove the claim, assume the existence of an $(\sidefrac{\eps}{2}, r_n)$-tracing $\{\Vecy_m\}$.  Let $\ball[]{\Vecx}{r}$ represent a ball of radius $r$ centered at a sample $\Vecx$.  From our tracing assumption and the upper bound on $r_n$, one can show that $\ball{\Vecy_m}{r_n}$ (and hence each $\Vecy_{m-1}$--$\Vecy_m$ and $\Vecy_m$--$\Vecy_{m+1}$ connection) lies in $\Xfree$.  The existence of a collision-free path through $\{\Vecy_m\}$ ensures that trees $\SetT_i$ and $\SetT_g$ will meet and hence that \BFMT will return a feasible path at termination.

		For each sample point $\Vecx \in \SampleSet$, let $\CostFcn_k(\Vecx) := \Call{Cost}{\Vecx,\SetT_k}, k=\{i,g\}$ denote the cost-to-come of $\Vecx$ from $\Vecx\subinit$ in $\SetT_i$ or the cost-to-go from $\Vecx$ to $\Vecx\subgoal$ in $\SetT_g$ (infinite if outside of tree $\SetT_k$).  When the algorithm terminates, at least one sample $\Vecx\submeet \in \SetV_i \intersect \SetV_g$ exists where the two trees meet; we select the $\Vecx\submeet$ that minimizes the total cost-to-go such that $\CostFcn_n = \CostFcn_i(\Vecx\submeet) + \CostFcn_g(\Vecx\submeet)$.  We then bound the costs-to-come of each $\{\Vecy_m\}$ using an inductive hypothesis (Eq. 5) in Theorem VI.1 of \cite{ES-LJ-MP:15a}.  Finally, to bound $\CostFcn_n$ for \BFMT, we consider the two cases: (i) there exists some $\Vecy_m \in \SetV_i \intersect \SetV_g$, and (ii) there are no $\Vecy_m \in \SetV_i \intersect \SetV_g$.  Straightforward bounds can be derived in both cases to show that $\CostFcn_n \leq \CostFcn(\Vecy) + r_n$.  This then completes the proof.
	}
\end{proof}

\iftoggle{arXivVersion}{%
	\begin{remark}[Tightened bound for connection radius]
		As discussed in \cite{ES-LJ-MP:15a}, for the sake of clarity the constant term $4$ in the expression for $r_n$ is greater than is necessary for \cref{thm:pathtracing} to hold.  A more careful argument along the lines of the original \FMT AO proof \cite{LJ-MP:13} would suffice to show that
		a factor of $2$ satisfies the theorem as well.
	\end{remark}
}{%
}
	
\begin{remark}[Alternative termination criteria]
	\label{rem:stopping crit}
	The proof holds as well for the different expansion and termination criteria discussed in Section \ref{subsubsec: BFMT-Variations}. However, due to space constraints the details are omitted.
\end{remark}

We are now ready to show that \BFMT is asymptotically-optimal. The next theorem defines this formally.
\begin{theorem}[\BFMT asymptotic optimality]
	\label{thm:bfmt ao}
	Assume a $\delta$-robustly feasible path planning problem as defined in \cref{sec: Problem} with optimal path $\sigma\supopt$ of cost $\CostFcn\supopt$.  Then \BFMT converges \emph{in probability} to $\sigma\supopt$ as the number of samples $n \rightarrow \infty$.  Specifically, for any $\epsilon > 0$,
		\[ \liminfty[n] \Probability{ \CostFcn_n > (1+\epsilon) \CostFcn\supopt} = 0 \]
\end{theorem}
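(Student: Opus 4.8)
The plan is to deduce \cref{thm:bfmt ao} from \cref{thm: BFMT Cost Comparison} by a short approximation argument, the only delicate point being that \cref{thm: BFMT Cost Comparison} applies to a feasible path possessing \emph{strong $\delta$-clearance} for some $\delta > 0$, whereas the optimal path $\sigma\supopt$ of a $\delta$-robustly feasible problem may graze $\Xobs$ and hence have zero clearance. To bridge this gap I would invoke the definition of $\delta$-robust feasibility directly: it supplies a sequence $\{\sigma_k\}_{k=1}^\infty$ of feasible paths, each joining $\Vecx\subinit$ to $\Vecx\subgoal$ and having strong $\delta_k$-clearance for some $\delta_k > 0$, with $\CostFcn(\sigma_k) \to \CostFcn\supopt$ as $k \to \infty$.

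Fix $\epsilon > 0$ and choose $\epsilon_1, \epsilon_2 > 0$ with $(1+\epsilon_1)(1+\epsilon_2) \le 1 + \epsilon$ (for instance $1+\epsilon_1 = 1+\epsilon_2 = \sqrt{1+\epsilon}$). Using $\CostFcn(\sigma_k) \to \CostFcn\supopt$, pick and fix an index $k$ with $\CostFcn(\sigma_k) \le (1+\epsilon_1)\CostFcn\supopt$ and set $\bar\sigma := \sigma_k$; this $\bar\sigma$ is feasible, runs from $\Vecx\subinit$ to $\Vecx\subgoal$, and has strong $\bar\delta$-clearance with $\bar\delta := \delta_k > 0$, so it satisfies the hypotheses of \cref{thm: BFMT Cost Comparison}. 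Applying that lemma to $\bar\sigma$, with its parameter $\eps$ replaced by $\epsilon_2$ and $r_n$ as in \cref{eqn: Connection Radius} for any fixed $\eta \ge 0$, yields
\[
	\Probability{\CostFcn_n > (1+\epsilon_2)\CostFcn(\bar\sigma)} = \BigO{n^{-\frac{\eta}{d}}\log^{-\frac{1}{d}} n},
\]
where the implied constant may depend on $\bar\delta$, $\epsilon_2$, $\Vecx\subinit$, $\Vecx\subgoal$ (hence on $\epsilon$ and the chosen $k$) but not on $n$. On the complement of that event, $\CostFcn_n \le (1+\epsilon_2)\CostFcn(\bar\sigma) \le (1+\epsilon_2)(1+\epsilon_1)\CostFcn\supopt \le (1+\epsilon)\CostFcn\supopt$, so by event inclusion $\Probability{\CostFcn_n > (1+\epsilon)\CostFcn\supopt} \le \Probability{\CostFcn_n > (1+\epsilon_2)\CostFcn(\bar\sigma)} = \BigO{n^{-\frac{\eta}{d}}\log^{-\frac{1}{d}} n}$. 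Because this expression tends to $0$ as $n \to \infty$ for every $\eta \ge 0$ --- indeed $\log^{-1/d} n \to 0$ even when $\eta = 0$ --- the limit in the statement is zero, which is precisely convergence in probability of $\CostFcn_n$ to $\CostFcn\supopt$.

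Two remarks on scope complete the argument. First, \cref{thm: BFMT Cost Comparison} is stated for \BFMT run without the \Call{\InsertNewSample}{} subroutine (``Failure'' reported in its place); for the full algorithm the returned cost is never larger, so the event $\{\CostFcn_n > (1+\epsilon)\CostFcn\supopt\}$ is contained in its counterpart for the restricted algorithm and the bound transfers by an \emph{a fortiori} argument, as noted at the opening of \cref{sec: BFMT* Proofs}. Second, by \cref{rem:stopping crit} the same reasoning covers the \OptimalityCriterion termination and \ZminCriterion expansion variants. I do not foresee a genuine obstacle: once the clearance issue is handled through $\delta$-robust feasibility the proof is a three-step inequality chain; the one point needing care is keeping the path-approximation slack $\epsilon_1$ distinct from the algorithmic-suboptimality slack $\epsilon_2$ and choosing them so that $(1+\epsilon_1)(1+\epsilon_2) \le 1+\epsilon$.
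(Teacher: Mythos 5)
Your proposal is correct and follows essentially the same route as the paper: both deduce the theorem as a corollary of \cref{thm: BFMT Cost Comparison}, using $\delta$-robust feasibility to produce a strongly-clear feasible path whose cost is within a multiplicative factor of $\CostFcn\supopt$, applying the lemma to that approximant, and letting $n \rightarrow \infty$ (with the same \emph{a fortiori} handling of \Call{\InsertNewSample}{}). The only difference is cosmetic bookkeeping: you choose slacks with $(1+\epsilon_1)(1+\epsilon_2) \leq 1+\epsilon$, which sidesteps the paper's use of $\left(1+\sidefrac{\epsilon}{3}\right)^2 \leq 1+\epsilon$ and its separate monotonicity argument for $\epsilon > 3$.
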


\iftoggle{arXivVersion}{%
	\begin{proof}
		The proof follows as a corollary to \cref{thm: BFMT Cost Comparison}.  By our $\delta$-robustly feasible assumption, we can find a strong $\delta$-clearance feasible path $\map{\sigma}{\unitinterval}{\Xfree}$ that approximates $\sigma\supopt$ with cost $\CostFcn(\sigma) < \left(1 + \sidefrac{\epsilon}{3}\right) \CostFcn\supopt$ (\textit{i.e.}, less than factor $\sidefrac{\epsilon}{3}$ from $\CostFcn\supopt$), for any $\epsilon > 0$.  By \cref{thm: BFMT Cost Comparison}, we can choose $n$ sufficiently large such that \BFMT returns an $\sidefrac{\epsilon}{3}$ cost approximation to the approximant:
		\begin{align*}
			\Probability{\CostFcn_n > \left(1 + \sidefrac{\epsilon}{3}\right)^2 \CostFcn\supopt} &< \Probability{\CostFcn_n > \left(1 + \sidefrac{\epsilon}{3}\right) \CostFcn(\sigma)} \\
				&= \BigO{n^{-\frac{\eta}{d}}\log^{-\frac{1}{d}} n}
		\end{align*}
		To approach the optimal path, let the number of samples $n \rightarrow \infty$.  It follows that, for any $\eta \geq 0$:
		\begin{align*}
			\liminfty[n] \Probability{ \CostFcn_n > \left(1 + \sidefrac{\epsilon}{3}\right)^2 \CostFcn\supopt} &< \liminfty[n] \BigO{n^{-\frac{\eta}{d}}\log^{-\frac{1}{d}} n} = 0 
		\end{align*}
		Now we relate this to the original claim.  First suppose that $\epsilon \leq 3$.  From $\left(1 + \sidefrac{\epsilon}{3}\right)^2 \leq 1 + \epsilon$, the event $\left\{\CostFcn_n > (1 + \epsilon) \CostFcn\supopt\right\}$ is a subset of the event $\left\{\CostFcn_n > \left(1+\sidefrac{\epsilon}{3}\right)^2 \CostFcn\supopt\right\}$, hence:
		\begin{align*}
			\liminfty[n] \Probability{ \CostFcn_n > (1+\epsilon) \CostFcn\supopt} &\leq \liminfty[n] \Probability{ \CostFcn_n > \left(1+\sidefrac{\epsilon}{3}\right)^2 \CostFcn\supopt } \!= \!0.
		\end{align*}
		Because the probability is monotone-decreasing in $\epsilon$ as $\epsilon$ increases, the statement holds for all $\epsilon > 3$ as well (to see this, apply \cref{thm: BFMT Cost Comparison} again for $m$ sufficiently large to handle $\epsilon = 3$; then by similar argument as above $\Probability{ \CostFcn_m > (1+\epsilon) \CostFcn\supopt} < \Probability{ \CostFcn_m > (1+3) \CostFcn\supopt} = \BigO{m^{-\frac{\eta}{d}}\log^{-\frac{1}{d}} m}$ and take the limit as $m \rightarrow \infty$).  Hence \mbox{$\liminfty[n] \Probability{ \CostFcn_n > (1+\epsilon) \CostFcn\supopt} = 0$} holds for arbitrary $\epsilon$, and we see that \BFMT converges in probability to the optimal path, as claimed.
	\end{proof}
}{%
	\begin{proof}
		We provide a brief sketch of the proof here but refer the interested reader to the arXiv version of the paper for full details.  The proof essentially follows as a corollary to \cref{thm: BFMT Cost Comparison}.  First, by our $\delta$-robustly feasible assumption, we know a strong $\delta$-clearance feasible path $\map{\sigma}{\unitinterval}{\Xfree}$ exists that closely approximates $\sigma\supopt$.  By \cref{thm: BFMT Cost Comparison}, we can choose $n$ sufficiently large such that \BFMT returns an $\sidefrac{\epsilon}{3}$ cost approximation $\CostFcn_n$ to the path approximant $\CostFcn(\sigma)$.  To approach the optimal path, let the number of samples $n \rightarrow \infty$.  We then show using an events bound and monotonicity arguments that the probability of returning a path more costly than a factor $\epsilon$ above $\CostFcn\supopt$ vanishes in the limit for arbitrary $\epsilon > 0$, and hence that \BFMT converges in probability to the optimal path, as claimed.
	\end{proof}
}

\iftoggle{arXivVersion}{%
\begin{remark}[Convergence rate]
	Note that we can also translate the convergence rate from \cref{thm: BFMT Cost Comparison} to the setup of \cref{thm:bfmt ao}, which does not require strong $\delta$-clearance.  For any $\epsilon > 0$, the optimal path can be approximated by a strong-$\delta$-clear path with cost less than $(1 + \epsilon) \CostFcn(\sigma)$ and we can focus on approximating that path to high-enough precision to still approximate the optimal path to within $(1 + \epsilon)$. Since the convergence rate in \cref{thm: BFMT Cost Comparison} only contains $\epsilon$ in the rate's constant, the big-O convergence rate remains the same.  This generalizes the convergence rate result in \cite{LJ-ES-AC-ea:15}, which only applied to a specific obstacle-free \state space, initial \state, and goal region.
\end{remark}
}{%
}

\subsection{Sampling and cost generalizations}
\label{subset:bfmt_disc}
It is worth mentioning that the asymptotic optimality (AO) properties of \BFMT are not limited to uniform sampling and arc-length cost functions.
\iftoggle{arXivVersion}{%
	For example, if one has prior information about areas that the optimal path is unlikely to pass through, it may be advantageous to consider a non-uniform sampling strategy that downsamples these regions. As long as the sampling density is lower-bounded by a \emph{positive} number over the \state space, \BFMT can be slightly altered (by merely increasing $r_n$ by a constant factor) to ensure it stays AO. The argument is analogous to that made in \cite{LJ-ES-AC-ea:15}, and essentially proceeds by making the search radius wide enough to balance out the detrimental effect of the lower sampling density (in some areas).  An additional common concern is when the cost is not arc-length, but some other metric or line integral cost.  In either case, \BFMT need only consider \emph{cost} balls instead of Euclidean balls when making connections.  Details on adjusting the algorithm and why the AO proof still holds can be derived from \cite{LJ-ES-AC-ea:15}. The argument basically shows that the triangle inequality either holds exactly (for metric costs) or approximately, and that this approximation goes away in the limit as $n \rightarrow \infty$.
}{%
	For non-uniform sampling, so long as sample density is lower-bounded by a \emph{positive} number over the \state space, one need only increase $r_n$ by a constant factor to ensure \BFMT stays AO.  Furthermore, to handle other metric or line integral costs, the Euclidean balls used when making node connections need only be replaced by \emph{cost} balls.  Details on adjusting the algorithm and why the AO proof still holds can be derived from \cite{LJ-ES-AC-ea:15}.
}

\section{Simulations}
\label{sec:sim}

In this section, we provide numerical path-planning experiments that compare the performance of \BFMTstar with other sampling-based, asymptotically-optimal planning algorithms (namely, \FMT, \RRTstar, and \PRMstar)\footnote{Existing state-of-the-art sampling-based, \bidirectional algorithms (namely, \RRTConnect and SBL) were initially also included.  However, average costs for \RRTConnect and SBL were roughly 2-4x greater, which occluded the details of other curves; they were thus omitted for clarity}.  Given a planning workspace and query, we aim to observe the quality of the solution returned as a function of the execution time allotted to the algorithm.  Here dynamic constraints are neglected and arc-length is used as path cost.  As a basis for quality comparison between incremental or "anytime" planners (such as \RRTstar) and non-incremental planners (such as \BFMTstar, which generate solutions via sample batches), we vary the number of samples drawn by the planners during the planning process (which in essence serves as a proxy to execution time).  Note \emph{sample count} has a different connotation depending on the planner that will not necessarily be the number of nodes stored in the constructed solution graph -- for \RRTstar (with one sample drawn per iteration), this is the number of iterations, while for \FMTstar, \PRMstar, and \BFMTstar, this is the number of free space samples taken during initialization.

\subsection{Simulation Setup}
To generate simulation data for a given experiment, we queried the planning algorithms once each for a series of sample counts, recorded the cost of the solution returned, the planner execution time\footnote{Code for all experiments was written in C++.  Corresponding programs were compiled and run on a Linux-operated PC, clocked at 2.4 GHz and equipped with 7.5 GB of RAM.}, and whether the planner succeeded or not, then repeated this process over 50 trials.  To ensure a fair comparison, each planning algorithm was tested using the Open Motion Planning Library (OMPL) v1.0.0 \cite{IAS-MM-LEK:12}, which provides high-quality implementations of many state-of-the-art planners and a common framework for executing motion plans.  In this way, we could ensure that all algorithms employed the \emph{exact same} primitive routines (\textit{e.g.}, nearest-neighbor search, collision-checking, data handling, \textit{etc}), and measure their performances fairly.  Regarding implementation, \BFMT, \FMT, and \PRMstar used $\eta = 0$ from \cref{thm: BFMT Cost Comparison} for the nearest-neighbor radius $r_n$ in order to satisfy the theoretical bounds provided in \cref{sec: BFMT* Proofs} and \cite{SK-EF:11}.  For \RRTstar, we used the default OMPL settings; namely, a 5\% goal bias and a steering parameter equal to 20\% of the maximum extent of the \state space (except for the $\alpha$-puzzle, in which case a value of 1.1 was found to work much better). For \FMT, we included the same \Call{\InsertNewSample}{} routine as \BFMT for \state resampling upon failure.  For all algorithms, early termination (\textit{e.g.}, using \Call{Terminate}{} for \BFMT) was suppressed by defining a 1000 second time limit, well above each planner's worst-case execution time.

Before proceeding, note that each marker shown on the plots throughout this section represents a single simulation at a fixed sample count.  The points on the curves, however, represent the mean cost/time of \emph{successful} algorithm runs \emph{only} for a particular sample count, with error bars corresponding to one standard deviation of the 50 run sample mean.\footnote{Standard deviation of the mean indicates where we expect with one-$\sigma$ confidence the distribution mean to lie based on the 50-run sample mean, and is related to the standard deviation of the distribution by $\sigma_{\mu} = \sigma/\sqrt{50}$.}
Sample counts varied from the order of 200 to 2000 points for 2D problems, from 1000 to 30000 points for 3D problems, and 500 to 4000 points for the hypercube examples.

\subsection{Results and Discussion}
Here we present benchmarking results (average solution cost versus average execution times and success rates) comparing \BFMT to other state-of-the-art sampling-based planners.  Three benchmarking test scenarios were considered: 
(1) a 2D ``bug trap'' and (2) a 2D ``maze'' problem for a convex polyhedral robot in the $\SE(2)$ \state space, as well as (3) a challenging 3D problem called the ``$\alpha$-puzzle'' in which we seek to untangle two loops of metal (non-convex) in the $\SE(3)$ \state space.  All problems were drawn directly from OMPL's bank of tests, and are illustrated in \cref{fig: scenarios}.  In each case, collision-checks relied on OMPL's built-in collision-checking library, FCL.  Additionally, to tease out the performance of \BFMT relative to \FMT in high-dimensional environments, we also studied a point mass robot moving in cluttered unit hypercubes of 5 and 10 dimensions.\footnote{We populated the space to 50\% obstacle coverage with randomly-sized, axis-oriented hyperrectangles.  $\Vecx\subinit$ was set to the center at $\left[0.5, \ldots, 0.5\right]$, with the goal $\Vecx\subgoal$ at the ones-vector (\textit{i.e.}, $\left[1, \ldots, 1\right]$).}

\begingroup
\newcommand{\Figures}{%
	2D_Bug_Trap/$\SE(2)$ bug trap,%
	2D_Maze/$\SE(2)$ maze,%
	3D_Alpha/\mbox{$\SE(3)$ $\alpha$-puzzle}%
}
\begin{figure}[ht!]
	\foreach \figname/ \figcaption [count=\ni] in \Figures {
		\begin{subfigure}[b]{0.15\textwidth}
			\centering\includegraphics[width=\textwidth]{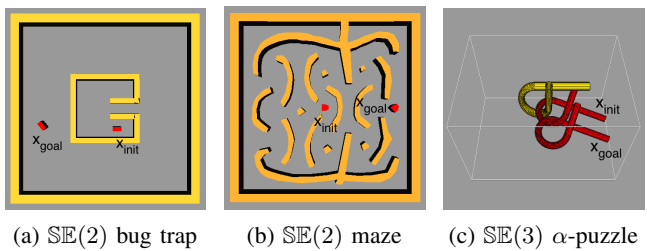}
			\caption{\figcaption}
			\label{fig: Performance_\figname}
		\end{subfigure}
	}
	\caption{Depictions of the OMPL rigid-body planning problems}
	\vspace{-0.5em}
	\label{fig: scenarios}
\end{figure}
\endgroup

\Cref{fig: Performance_OMPL_Problems} shows the results for each \BFMT, \FMT, \RRTstar, and \PRMstar.  Performance here is measured by execution time on the x-axis and solution cost on the y-axis---high quality data points are therefore located in the lower-left corner (low-cost solutions obtained quickly).  The plots reveal that both \FMT and \BFMT for the most part outperform \RRTstar  as well as \PRMstar.  In particular, \BFMT and \FMT achieve higher success rates (always a flat 100\% for the cases studied) in shorter time.  To extract further information, we need to examine each test in detail.

In the Bug Trap and Maze problems, \BFMT notably generates the same cost-time curve as \FMT (meaning they return solutions of very similar cost for a given sample count), but with data points shifted to the left (indicating they were obtained in shorter execution time).  Though not shown due to slow running times for \PRMstar (whose results had to be truncated to clarify detail), all planners appear to tend towards similar low-cost solutions as more execution time was allocated.  However \BFMT and \FMT seem to converge to an optimum much faster, particularly for the Maze problem (on the order of 1.5 and 2.0 seconds respectively, compared to 3-4 seconds for \RRTstar and 5-7 seconds for \PRMstar).  This contrast becomes even more evident for the $\alpha$-puzzle.  Here we see an unusual spread of solutions -- one in a band at around 500 cost and another at around 275.  These indicate the presence of two solution types, or \emph{homotopy classes}: one corresponding to the true $\alpha$-puzzle solution, and another less-efficient path.  This appears to have yielded a ``bump'' in the \BFMT cost-curve, where increasing the sample count momentarily gives an increased average cost.  We believe this is a result of how \BFMT trees interconnect; at this count, by unlucky circumstance, the longer homotopy seems to be found first more often than usual.  But as proved in \cref{sec: BFMT* Proofs}, the behavior disappears as $n \rightarrow \infty$.  Note \RRTstar seems to avoid this issue through goal biasing.  Despite the difficult problem structure, \BFMT finds the cheaper homotopy faster than other planners, with many more of its data points clustered in the lower-left corner, generally at lower costs and times than \RRTstar and of equal quality but faster times than \FMT.

These results suggest that \BFMT tends to an optimal cost at least as fast as the other planners, and sometimes much faster.  To shed light on the relative performance of \FMT and \BFMT further, we compare them in higher dimensions.  Results for the 5D and 10D hypercube are shown in \cref{fig: Performance_HypercubicalSpace} (success rates were again at 100\%, and were thus omitted).  Here \BFMT substantially outperforms \FMT, particularly as dimension increases, with convergence in roughly 0.5 and 1.4 seconds (5D), and 5 and 20 seconds (10D) on average.  
This suggests that \emph{reachable volumes} play a significant role in their execution time.  The relatively small volume of reachable \states around the goal at the corner implies that the reverse tree of \BFMT expands its wavefront through many fewer states than the forward tree of \FMT (which in fact needlessly expands towards the zero-vector); tree interconnection in the \bidirectional case prevents its forward tree from growing too large compared to unidirectional search.  This is pronounced exponentially as the dimension increases.  In trap or maze-like scenarios, however, \bidirectionality does not seem to change significantly the number of states explored by the marching trees, leading to comparable performance for the $\SE(2)$ bug-trap and maze.
Note we expect a greater contrast in execution times in favor of \BFMT as the cost of collision-checking increases, such as with many non-convex obstacles or in time-varying environments.


\begingroup
\newcommand{\Problems}{%
	2D_Bug_Trap/a bug trap in $\SE(2)$-space,%
	2D_Maze/a maze in $\SE(2)$-space,%
	3D_Alpha/the $\SE(3)$ "alpha" puzzle%
}
\newcommand{\HypercubeProblem}{HR}
\renewcommand{\ObstacleCoverages}{50}
\begin{figure}[ht!]
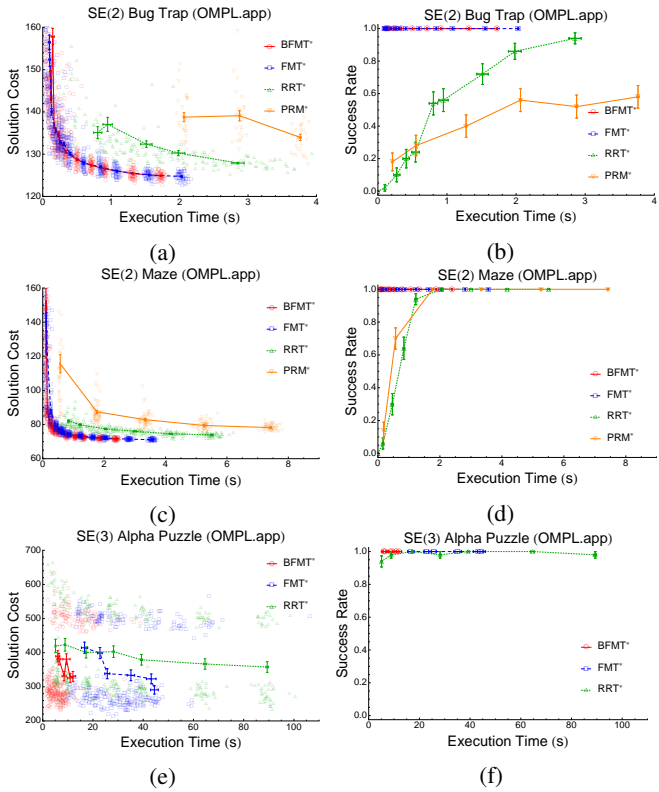

	\foreach \problem/ \probname in \Problems {
			\centering
			\begin{subfigure}[c]{\figwidth}
				\includegraphics[width=\textwidth]{\problem_\CostVsTime.pdf}
				\caption{}
			\end{subfigure}
			\hfill
			\begin{subfigure}[c]{\figwidth}
				\includegraphics[width=\textwidth]{\problem_\SuccessVsTime.pdf}
				\caption{}
			\end{subfigure}
	}
	\caption{Simulation results for the three OMPL scenarios.}
	\vspace{-0.5em}
	\label{fig: Performance_OMPL_Problems}
\end{figure}
\foreach \coverage in \ObstacleCoverages {
	\begin{figure}[ht!]
		\centering
		\foreach \dimension in \Dimensions {
			\begin{subfigure}[c]{\figwidth}
				\includegraphics[width=\textwidth]{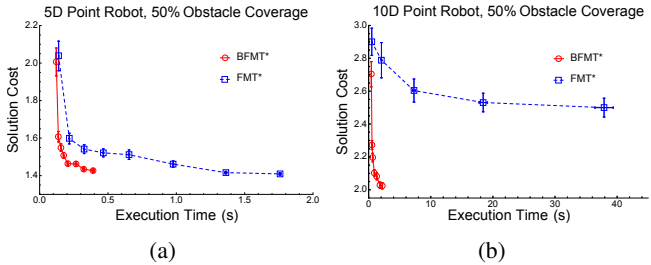}
				\caption{}
			\end{subfigure}
		}
		\caption{\FMT and \BFMT results for 5D and 10D cluttered hypercubes (\coverage\% coverage; all success rates were $100\%$).}
		\vspace{-1em}
		\label{fig: Performance_HypercubicalSpace}
	\end{figure}
}
\endgroup

\section{Conclusion}
\label{sec:conc}

In this paper, we presented a \bidirectional, sampling-based, asymptotically-optimal motion planning algorithm named \BFMT, for which we rigorously proved its optimality and characterized its convergence rate -- arguably firsts in the field of \bidirectional sampling-based planning.  Numerical experiments in $\reals^d$, $\SE(2)$, and $\SE(3)$ revealed that \BFMT tends to an optimal solution at least as fast as its state-of-the-art counterparts, and in some cases significantly faster.  Convergence rates are expected to improve with parallelization, in which each tree is grown using a separate CPU.

Future research will examine \BFMT's interaction with more advanced techniques, such as adaptive sampling near narrow passages or sample biasing in \Call{\InsertNewSample}{} (\cref{alg: InsertNewSample}) towards failed wavefronts.  We also plan to extend \BFMT to dynamic environments through lazy re-evaluation (leveraging its tree-like forward and reverse path structures) in a way that reuses previous results as much as possible.  Maintaining bounds on run-time performance and solution quality in this new context will be the greatest challenges.  Ultimately, we hope that \BFMT will enable fast, easy-to-implement planning and re-planning in a wide range of time-varying scenarios, much as we have shown here for the static case.

\bibliographystyle{IEEEtran}
\bibliography{\bibfiles}

\end{document}